\documentclass[12p]{article}

\title{\LARGE \bf
Lifelong Localization in Dynamic Indoor Environments Combining Odometry with Sparse Distance Sampling
}

\usepackage[letterpaper,
            left=1.7in,
            right=1.7in,
            top=1.2in,
            bottom=1.2in]{geometry}

\author{Michael M. Bilevich$^{\dag}$ \and Tomer Buber$^{\dag}$ \and Dan Halperin$^{\dag}$}
\date{}

\usepackage{amssymb}
\usepackage{amsmath}
\usepackage{amsfonts}

\usepackage{amsthm}
\usepackage{listings}
\usepackage{hyperref}
\usepackage{graphicx}
\usepackage{wrapfig}
\usepackage{xcolor}
\usepackage{mathtools}
\usepackage{algorithm}
\usepackage{algpseudocode}
\usepackage{textcomp}
\usepackage{cite}
\usepackage{booktabs}

\usepackage{tikz}
\usetikzlibrary{shapes.geometric, arrows.meta, positioning, fit, backgrounds}

\newtheorem{theorem}{Theorem}[section]

\newtheorem{corollary}{Corollary}[section]

\newcommand{\bbR}{\mathbb{R}}
\newcommand{\bbS}{\mathbb{S}}
\newcommand{\bbP}{\mathbb{P}}
\newcommand{\calH}{\mathcal{H}}
\newcommand{\calQ}{\mathcal{Q}}
\newcommand{\calW}{\mathcal{W}}

\newcommand{\calC}{\mathcal{C}}

\newcommand{\calN}{\mathcal{N}}

\newcommand{\diam}{\mathrm{diam}}

\newcommand{\qbl}{q_{\mathrm{bl}}}
\newcommand{\qtr}{q_{\mathrm{tr}}}
\newcommand{\bel}{\mathrm{Bel}}

\begin{document}

\maketitle
\begingroup
\renewcommand\thefootnote{\fnsymbol{footnote}}
\footnotetext[2]{Blavatnik School of Computer Science and Artificial Intelligence, Tel-Aviv University, Israel. This work has been supported in part by the Israel Science
Foundation (grant no~3598/25), 
by the Blavatnik Computer Science Research Fund, 
and by the Shlomo Shmelzer Institute for 
Smart Transportation at Tel Aviv University.}
\endgroup
\thispagestyle{empty}
\pagestyle{empty}

\begin{abstract}

Localization is a key task in robot navigation, and many techniques exist for it.
In many plausible scenarios, a robot might face unforeseen, dynamic obstacles, rendering any pre-determined map inaccurate for localization.
In this work, we propose a robust lifelong localization framework in dynamic planar indoor environments, using the robot's odometry and sparse distance sampling.
We demonstrate how distance samples can be used to provide a robust prior on the robot's location. This technique can solve the kidnapped robot problem in real time, up to symmetries. Based on insights from real-world recorded data, we also account for dynamic obstacles.
We then fuse this prior, over time, with the odometry to converge to the robot's location.
A central property of our method is that it provably converges to the robot's ground truth pose even in large indoor environments when the environment is static. We further show that this guarantee also holds in dynamic environments, as long as the nature of those changes has been correctly learned.
We demonstrate the effectiveness of our approach in different real-world indoor environments. 
In particular, we achieve a localization comparable to SLAM with merely a few (sixteen) distance samples, as opposed to the full LiDAR range.
Sufficing with only sparse distance sampling is advantageous in terms of sensor cost, privacy, storage space, and transmission bandwidth.

\end{abstract}

\section{Introduction}

Robot localization is a critical ingredient in robot navigation~\cite{motroni_survey_2021}; even if the map of the environment is entirely known, and we plan a collision-free path of motion, we still need to determine the robot's location in the environment to carry out that path of motion. Determining the location of a robot in the environment is referred to as localization. Localization has been extensively studied and can be solved with various techniques and sensors. These methods may be intrinsic (attached to the robot) or extrinsic (attached to the environment).

Extrinsic approaches are commonplace; one well-known example is the Global Positioning System (GPS)~\cite{reina_adaptive_2007,vincent_comparison_2010,yousuf_sensor_2016}. By priorly placing landmarks in designated locations in the environment, the robot can triangulate its location with respect to the landmarks it perceives. These landmarks can be sophisticated and utilize properties of electromagnetic waves (such as RSSI~\cite{chen_survey_2022,yang_survey_2021} and RFID~\cite{motroni_survey_2021}), or they can be as simple as printed QR codes~\cite{bach_application_2023,morar_comprehensive_2020,nazemzadeh_indoor_2017}.
Methods originally developed for virtual reality applications can also be used for localization~\cite{greiff_performance_2019,hoppe_dronos_2019,taffanel_lighthouse_2021}.
A disadvantage of the extrinsic approach is that prior placement of landmarks may not be viable in some environments, such as confined environments (mines, caves)~\cite{debanne_global_1997,inostroza_robust_2023} or disaster-stricken environments (during fires or earthquakes)~\cite{nagatani_three-dimensional_2003,stormont_localization_2007}.

On the other hand, intrinsic localization is performed by sensors mounted on the robot itself. In such a case, we assume that we are given a map of the environment or that the sensors are sufficiently sophisticated to both map the environment and track the robot's location in the (simultaneously) mapped environment. The latter is the task of the intensively investigated Simultaneous Localization and Mapping (SLAM)~\cite{abaspur_kazerouni_survey_2022,aulinas_slam_2008,khan_comparative_2021,zhang_3d_2024}. Usually, the robot is mounted with depth cameras or LiDAR sensors. Both may be expensive~\cite{gerwen_indoor_2022}, especially when considering swarms of robots. Furthermore, if the computations are performed on a remote computer, transmitting LiDAR scans or image data over the network requires high bandwidth, which may cause communication interference~\cite{wang_localization_2018}.
One primary concern regarding using cameras attached to the robots is the potential breach of privacy~\cite{fernandes_detection_2016,lutz_robot_2020}.

\begin{figure}[t]
    \centering
    \includegraphics[width=0.85\linewidth]{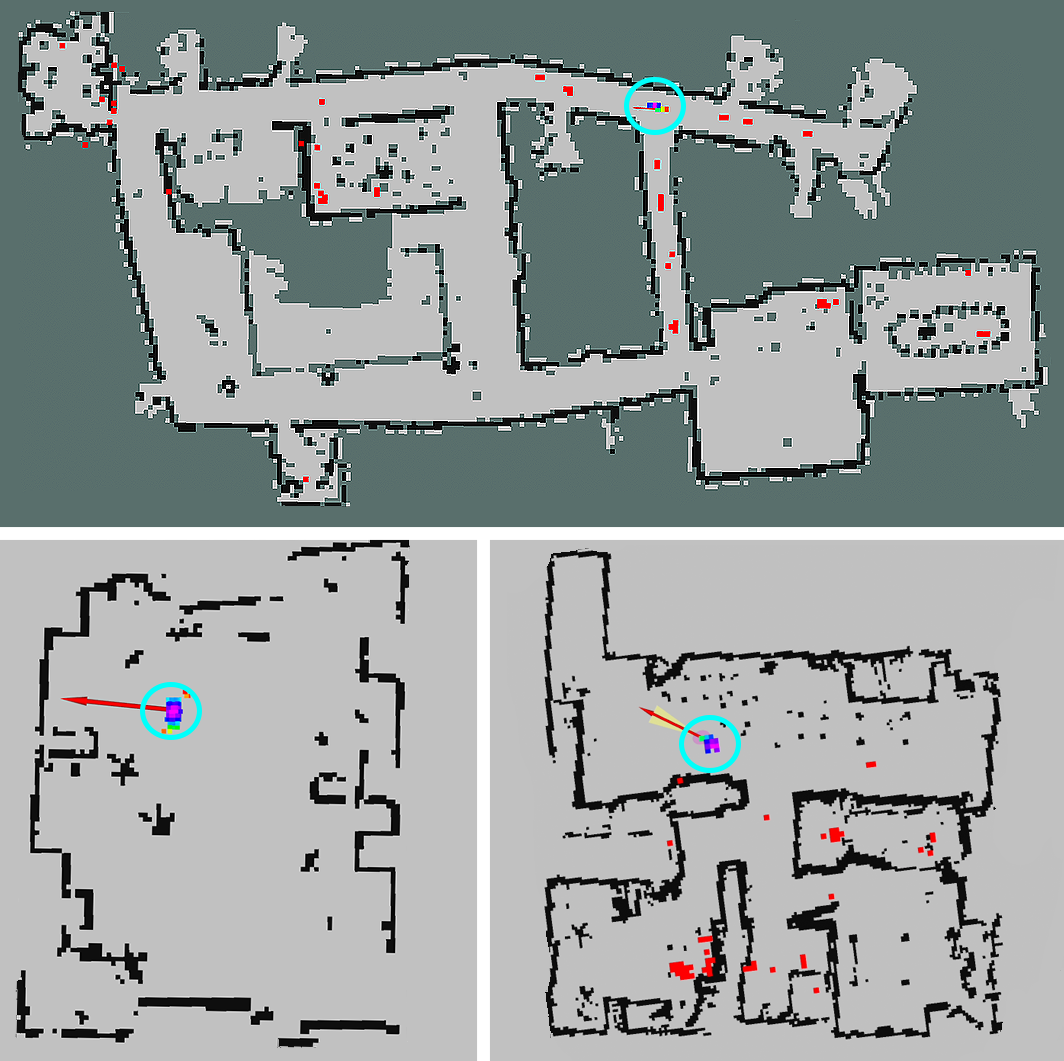}
    \vspace*{-0.25em}
    \caption{A demonstration of our algorithm in different real-world scenarios. All images are screenshots from RViz. In all scenarios, the robot uses merely $k=16$ distance measurements and its odometry to consistently find its location. All colorful squares are pose candidates returned by our method, where their color ranges from red (least likely) to magenta (most likely). Cyan marks the ground truth location. \textbf{Top}: A large-scale floor plan (\texttt{fl4}). \textbf{Bottom left}: A map of our laboratory (\texttt{lab446}). \textbf{Bottom right}: A floor plan of an apartment (\texttt{apt}).}
    \label{fig:main}
\end{figure}

The broad term \emph{localization} can refer to different problems, sometimes leading to vastly different solutions. One example is the \emph{kidnapped robot problem}~\cite{pak_state_2023}, where we need to determine the location of a robot given the map of the environment, but without any prior knowledge of its location. On the contrary, there is the \emph{tracking problem}~\cite{chen_survey_2022}, where we start with a known robot location and strive to fix any drift and measurement errors throughout its motion. This is also known as \emph{dead reckoning}~\cite{fuke_dead_1996,halvorsen_analysis_2021,do_dero_2024,guo_model-based_2024}. Life-long localization~\cite{tipaldi_lifelong_2013,muhlfellner_summary_2016,zhao_general_2021,sousa_systematic_2023,li_ll-localizer_2025} is the problem of determining the localization of a robot in a known environment throughout multiple sessions. Usually, this is performed by storing and updating the map~\cite{sousa_systematic_2023,li_ll-localizer_2025} during multiple sessions. Life-long indoor localization poses many difficult challenges, such as overcoming the presence of dynamic obstacles and changes in the environment~\cite{fang_liloc_2025,chen_slam-ramu_2024}, as well as computational~\cite{sousa_systematic_2023} challenges of effectively and quickly maintaining and refining the maps.

In this work, we present a lifelong indoor localization technique that requires merely odometry and a sparse distance sampling. We will also refer to this method as \emph{sparse distance sampling localization (SDSL)}, since we use only a few distance samples, as opposed to a range of distance samples returned by, for example, a LiDAR sensor.

Recent works~\cite{mustafa_guaranteed_2018,guyonneau_guaranteed_2014,song_guaranteed_2025,bilevich_sensor_2023,bilevich_localization_2024,bilevich_indoor_2025} have dealt with intrinsic localization in already-known environments. These methods are deterministic, unlike more classical approaches such as Bayesian filters and Monte-Carlo localization~\cite{chen_bayesian_2003,yang_survey_2021}. These works are robust for different kinds of errors, such as errors in measurements and mapping. We note that~\cite{mustafa_guaranteed_2018,guyonneau_guaranteed_2014,song_guaranteed_2025} used the full range of distance measurements given by a LiDAR sensor, while our works~\cite{bilevich_sensor_2023,bilevich_localization_2024,bilevich_indoor_2025} used only a sparse distance sampling.

We improve upon~\cite{bilevich_sensor_2023,bilevich_indoor_2025} by incorporating a probabilistic machinery, inspired by Bayesian localization, to utilize the robot's odometry. We also account for unforeseen dynamic obstacles and changes in the environment, based on the insight presented in~\cite{bilevich_localization_2024}.

We assume that the environment is a subset of $\bbR^2$, and our robot is \emph{planar} with three degrees of freedom - two for translations in the environment, and one for rotation about its origin. The robot is equipped with an array of range sensors with known offsets from the origin and any odometry sensor (e.g., an inertial measurement unit or IMU, wheel encoder, or optical flow). We assume that we are given a map that is a good approximation of the environment, albeit it may have some topological errors and unforeseen dynamic obstacles in the environment.

Similar to previous works, this framework is beneficial in terms of cost, privacy, and transmission bandwidth.

See Figure~\ref{fig:main} for an illustration of our method.

\begin{figure}[t!]
\centering
\includegraphics[width=0.995\linewidth]{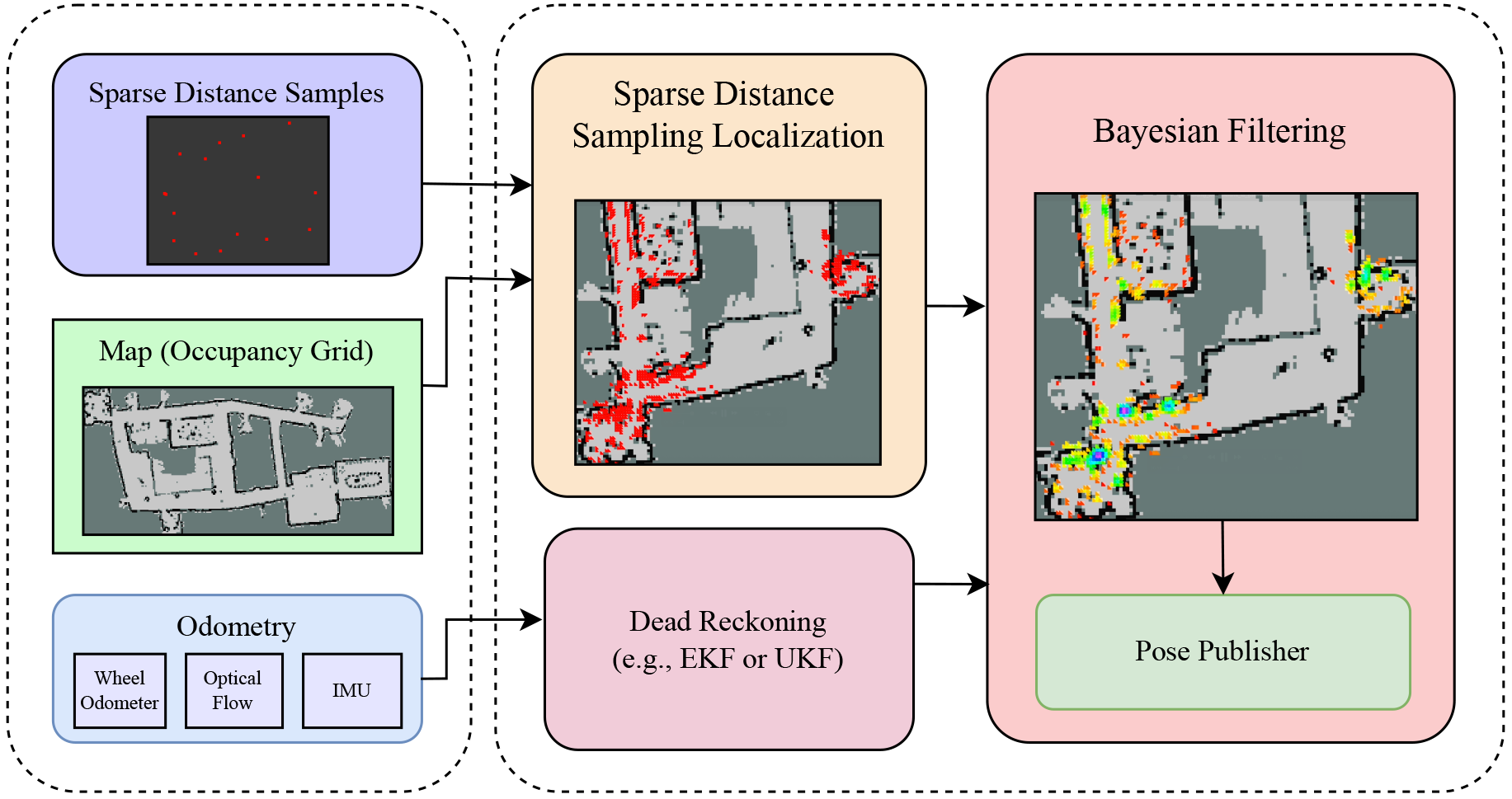}
\label{fig:sdsl-overview}
\vspace*{-1.25em}
\caption{An overview of our method. On the left-hand side are the inputs to our method; we assume that the map is pre-determined, and we receive a sparse (e.g., $k=16$) distance sampling, and any kind of odometry. We use odometry to calculate the dead reckoning of the robot. We also match the distance samples (SDSL) with the map to get the set of all possible pose candidates. We then fuse the dead reckoning with the SDSL to get the belief state for our robot, which is the likelihood of each pose candidate. Once there is one pose that is more likely than the others, we publish it as the robot's location.}
\end{figure}

\subsection{Contribution}

Our contributions are as follows:

\begin{itemize}
    \item An effective lifelong localization technique for planar robots, fusing odometry dead reckoning with sparse distance sampling, which is inherently robust for the kidnapped robot problem even 
    in dynamic environments.
    \item Guarantees on the correctness and completeness of the algorithm, proving that our localization technique would always report a pose candidate close to the true location, when the dynamic nature of the environment is reasonably predictable.
    \item We demonstrate our method on a mobile planar robot, showing the potential for real-world application.
    \item An open-source C++ library with Python bindings~\cite{jakob_nanobind_2022} that can utilize multi-core CPUs, as well as ROS2~\cite{macenski_robot_2022} packages for easy deployment on general robots.
\end{itemize}

\section{Preliminaries and Problem Statement}

\subsection{Subdivision Search of Fibers}

A fiber of some function $f$, denoted by $f^{-1}(y)$, is the set of all values $x$ such that $f(x)=y$, given a value $y$. A key observation of~\cite{bilevich_sensor_2023,bilevich_indoor_2025} is that given the distance measurements, the set of all possible locations is the intersection of fibers of the distance function. This task is studied in computer graphics, computational geometry, and computer algebra~\cite{botsch_polygon_2010}. One common example for the search for fibers is the meshing of implicit surfaces, which can be carried out by a variety of popular techniques: grid-based methods like marching cubes and dual contouring~\cite{chen_neural_2021,lorensen_marching_1998,newman_survey_2006,nielson_dual_2004} or Delaunay refinement~\cite{boissonnat_meshing_2006,boissonnat_delaunay_2014,boissonnat_anisotropic_2015}. Another common technique of searching and representing implicitly defined geometry is by subdivision searches, i.e., via quadtrees for two dimensions, octrees for three dimensions, and orthrees for higher dimensions~\cite{campolattaro_quadtrees_2024,meagher_geometric_1982,samet_overview_1988}. The method presented in this work is also inspired by methods of the soft subdivision search (SSS) framework~\cite{hsu_rods_2019,wang_soft_2013,yap_soft_2015}. Generally, subdivision methods are superior to grid-based methods in terms of running time complexity~\cite{bilevich_note_2025}.

\subsection{Problem Statement}
\label{ssec:problem-statement}
In this work, we assume that the environment $\calW \subset \bbR^2$ is a closed subset of the plane. The robot can translate and rotate freely in the environment, and its configuration space~\cite{lavalle_planning_2006} is $\calC \coloneqq \bbR^2\times \bbS^1$. For a configuration $q=(q_p, q_\theta)\in \calC=\bbR^2\times\bbS^1$, $q_p$ is the position of the robot's origin in the plane and $q_\theta$ is its orientation.

Note that the orientation space can be represented by the angles $\bbS^1 \simeq [0,2\pi)$ or the corresponding points on the unit circle $\bbS^1 \subset \bbR^2$. In this work, we use both notions interchangeably. 

The robot is mounted with $k$ range sensors, with known offsets $g_1,\dots,g_k \in \bbR^2 \times \bbS^1$; Initially, the sensors are mounted at an offset ${g_i}_p$ from the robot's origin and point in the direction ${g_i}_\theta$.

Assuming that the robot is at some configuration $q\in \calC$, when measuring distance from the $i$th sensor, we cast a ray emanating from the sensor's current position, which is the offset ${g_i}_p$ translated and rotated by $q$. The ray's direction is the direction ${g_i}_\theta$ rotated by $q$. We denote the application of the pose $q$ on the sensor position ${g_i}_p$ and direction ${g_i}_\theta$ by $q\cdot {g_i}_p$ and $q\cdot {g_i}_\theta$, respectively.

Let $h:\bbR^2\times \bbS^1 \to \bbR_{\geq 0}$ denote the distance measurement function. Then $h(p,\theta)$ is the distance between the point $p\in\bbR^2$ and the first intersection of a ray emanating from $p$ in the direction $\theta\in\bbS^1$ with the boundary of the environment $\partial\calW$. The distance measurement of the $i$th sensor, denoted by $d_i\geq 0$, is $d_i = h(q\cdot {g_i}_p, q\cdot {g_i}_\theta)$.

\paragraph*{Problem Statement}{
Given a map of the environment $\calW$, the sensor offsets $g_i$, and their corresponding distance measurements $d_i$ for $i=1,\dots,k$, find the set of all poses $q\in\calC$ that, by placing the robot at pose $q$, it would measure the distances $d_1,\dots,d_k$ at the offsets $g_1,\dots,g_k$, respectively.
}

\section{Method Overview}

We now provide an overview of our method. See Figure~\ref{fig:sdsl-overview} for an illustration of the proposed approach. 
In the figure, we use ROS2~\cite{macenski_robot_2022} notions throughout, and ROS was our choice for the implementation.
However, the method and algorithms can be implemented with any other framework. 
Furthermore, as described in Section~\ref{ssec:implementation-details}, our core method is implemented as a standalone C++ library independent of ROS.

\subsection{Sensors Input and Pre-determined Map}
 Our method builds on three main inputs. 
The first is a pre-existing map of the environment, which provides the global frame of reference. 
The second is the robot's odometry, giving a continuous estimate of its motion over time. 
The third is a sparse set of $k$ range measurements, for which the offsets $g_1,
\dots,g_k$ are known and pre-determined.
No additional sensing or framework-specific assumptions are required.

\subsection{Message Processing and Sensor Fusion}
    Our approach works by first processing each modality (the odometry, dead reckoning, and the SDSL) separately and efficiently. Then, we combine (fuse) both intermediate results into a more robust localization.
    Using the robot's odometry, we can compute dead reckoning. That is, by integrating the odometry, we can estimate the trajectory/pose of the robot with respect to its origin of motion~\cite{do_dero_2024}. This estimate suffices locally, but may drift for longer horizons due to factors like wheel slip and sensor noise.
    Note that this origin is not necessarily the origin of the map. Importantly, this local motion frame is not necessarily aligned with the global map frame, so an additional transformation between the two must eventually be established.

    With sparse distance sampling, our method produces a set of candidate poses that capture all robot locations consistent with the measurements in the global map. 
    
Since the number of samples is small and the environment may include moving obstacles, the resulting set of poses can spread over vastly different areas of the map. These candidate poses are similar to the particles in Monte Carlo localization. However, they are computed deterministically, based on the geometric constraints posed by the distance measurements. Details of the sparse distance sampling localization technique are presented in Section~\ref{sec:sdsl-method}

We then fuse the odometry trajectory with the poses set returned by the SDSL method. 
When using both, the odometry provides smooth local motion, and the sparse samples keep the estimate tied to the global map. 
The result is a weighted set of poses that reflects the robot’s most likely location. 
Details of the fusion process are given in Section~\ref{sec:fusion}.

\subsection{Method Output}

Finally, we must choose the robot's most probable (single) location. To do so, we cluster the particles and weight the probability/likelihood of each cluster. Whenever a cluster is more likely than the rest, its center of mass (weighed by the poses' likelihood) is chosen as a pose estimate, and the transform from the \texttt{odom} frame to the \texttt{map} is updated accordingly.

\section{SDSL for Kidnapped Robot Problem}
\label{sec:sdsl-method}

In this section, we describe the Sparse Distance Sampling Localization (SDSL) technique, which outputs the set of all feasible robot poses, given a single sparse distance sampling, i.e., a set of $k$ measurements $d_1,\dots,d_k \geq 0$ and their corresponding pre-determined offsets $g_1, \dots,g_k\in \bbR^2\times\bbS^1$.

To do so, we perform a subdivision search, similar to the one described in detail in~\cite{bilevich_note_2025,bilevich_indoor_2025}. The derivations are in particular identical to~\cite{bilevich_indoor_2025}, and thus are omitted here for brevity. We also note that, by definition (see Section~\ref{ssec:problem-statement}), for a single distance measurement $d_i$, the set of all possible locations that satisfy that measurement is a \emph{fiber} of the distance measurement function.

We choose a parameter $\delta > 0$, which is our desired approximation precision, and we also assume we are given $\varepsilon > 0$ as a distance measurement error bound. We start with a bounding volume $V_0$ of the configuration space---as one voxel, and recursively subdivide a voxel into smaller voxels only if it may contain the ground truth result, until all voxels are of diameter $<
\delta$. Thus, the crux of our method is effectively and correctly determining whether a voxel may or may not contain a ground truth result.

Furthermore, improving upon~\cite{bilevich_indoor_2025}, we assume that there is a \emph{$k$-$k'$-dynamic gap}~\cite{bilevich_localization_2024}. That is, we assume that out of the $k$ distance measurements, at least $k'$ samples correspond to features in the pre-determined map. Of course, this $k$-$k'$ gap depends on the dynamic nature of the environment, the semantic context of the present time (e.g., holidays or nights may have less traffic of dynamic obstacles), and even the location in the room; one possible example is that poses close to walls are almost surely guaranteed to have $k' > k/2$, regardless of the dynamic nature. See Figure~\ref{fig:half-kprime} for an illustration.

\begin{figure}[t!]
    \centering
    \includegraphics[width=0.3\linewidth]{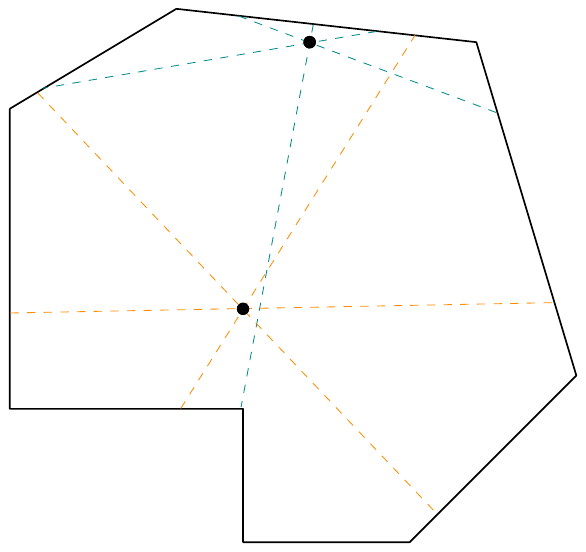}
    \caption{Illustration of two poses with $k=6$ distance measurements. The top point (whose rays are cyan) is closer to the environment's wall than the bottom (whose rays are orange). Notice how the bottom point is more likely to see dynamic obstacles, as there is more space between the robot and the environment. For the top point, about half of its rays measure the wall, which is relatively close.}
    \label{fig:half-kprime}
\end{figure}

Algorithm~\ref{alg:sdsl} describes our overall method. 
For each voxel, we estimate the value $k'$. The method $\mathrm{VoxelXPred}$ (for Voxel intersection Prediction),
tests whether a single measurement $d_i$ and its corresponding offset $g_i$ ``agree'' with the environment. That is, whether there exists some pose in the voxel for which, by placing the robot at that pose, the sensor whose initial offset is $g_i$ would measure the distance $d_i$ (up to an error of $\varepsilon$).
If at least $k'$ measures do not agree with the environment, the voxel is discarded; otherwise, the voxel is split (or reported, if its diameter is less than $\delta$).

To evaluate $\mathrm{VoxelXPred}$, we merely intersect an axis-aligned box with the boundary of the environment $\calW$.
Denote by $F_{d,g}:\calC \to \bbR^2$ the function that takes a robot configuration $q$, and returns the point in $\bbR^2$ which is $d$-units forward from the sensors $g$ when the robot is at pose $q$.
Note that if the intersection $F_{d,g}(V)\cap\partial\calW \neq \emptyset$ is not empty, there exists some pose in that voxel $V$ that, by placing the robot in that pose, the sensor at offset $g$ would measure exactly $d$. 
Although the geometry of the set $F_{d,g}(V)$ can be precisely described and computed, we approximate it by taking its bounding box, which is simpler to compute.
More details can be found in~\cite{bilevich_indoor_2025},
and the pseudo-code is presented in Algorithm~\ref{alg:voxel-isect-pred}.

The method $\mathrm{Split}(V)$ splits a voxel into a collection of smaller voxels such that their union equals the original voxel $V$. 
In this work, we split the voxel into $2^3$ sub-voxels by splitting into two in each dimension.
The method $\mathrm{EstimateKPrime}(V)$ is described in Section~\ref{ssec:estimating-kprime}.

Finally, we coincide the voxel outputs with a point cloud of the voxel centers. This can be done by choosing a value of $\delta$ which is strictly smaller than the desired accuracy, such that each two poses in the same voxel are effectively indistinguishable. 

\begin{algorithm}
    \caption{Sparse Distance Sampling Localization}
    \begin{algorithmic}
        \Require $\calW\subset\bbR^2, d_1,\dots,d_k\geq 0, g_1,\dots,g_k \in \bbR^2\times\bbS^1$
        \Require $\delta > 0, \varepsilon > 0$
        \Require $V_0 \subset \calC, \calW\times\bbS^1\subseteq V_0$
        \State $\calQ \gets \{V_0\}$, $\calQ' \gets \emptyset$
        \While {$\max_{V\in\calQ} \diam V \geq \delta$}
            \For {$V\in \calQ$}
            \State $k' \gets \mathrm{EstimateKPrime}(V)$
            \State $cnt \gets 0$
            \For {$i=1,\dots,k$}
                \If {$\mathrm{VoxelXPred(\calW, d_i, g_i, V, \varepsilon)}$}
                    \State $cnt \gets cnt + 1$
                \EndIf
            \EndFor
            \If{$cnt \geq k'$}
                $\calQ' \gets \calQ' \cup \mathrm{Split}(V)$
            \EndIf
            \EndFor
            \State $\calQ \gets \calQ'$, $\calQ' \gets \emptyset$
        \EndWhile\\
        \Return $\calQ$
    \end{algorithmic}
    \label{alg:sdsl}
\end{algorithm}

\begin{algorithm}
    \caption{$\mathrm{VoxelXPred}(\calW, d, g, V, \varepsilon)$}
    \begin{algorithmic}
        \Require $V=[\qbl,\qtr)\subset \calC$, $\calW\subset \bbR^2, g\in \bbR^2\times\bbS^1, d\geq 0, \varepsilon > 0$
        \State $\alpha_1 \gets g_x + d \cdot \cos g_\theta$, $\alpha_2 \gets g_y + d \cdot \sin g_\theta$
        \State $\beta_1 \gets \tan^{-1}(-\alpha_2 /\alpha_1)$, $\beta_2 \gets \tan^{-1}(\alpha_1/\alpha_2)$
        \State $\theta_1\gets {\qbl}_\theta$, $\theta_2 \gets {\qtr}_\theta$,  $\Theta \gets \{\theta_1, \theta_2\}$
        \For{$j=-3,\dots,3$, $i=1,2$}\\
            \Comment{In fact, only few $j$ values are feasible}
            \If{$\theta_1\leq \beta_i+j\cdot\pi \leq \theta_2$}
                \State $\Theta \gets \Theta \cup \{\beta_i+j\cdot \pi\}$
            \EndIf
        \EndFor
        \State $x_1 \gets \infty, x_2 \gets -\infty$, $y_1 \gets \infty, y_2 \gets -\infty$
        \For{$\theta \in \Theta$}
            \State $\gamma_1 \gets \cos\theta\cdot\alpha_1 - \sin\theta\cdot \alpha_2$,  $\gamma_2 \gets \sin\theta\cdot\alpha_1 + \cos\theta\cdot \alpha_2$
            \State $x_1\gets\min\{x_1,\gamma_1\}$, $x_2\gets\max\{x_2,\gamma_1\}$
            \State $y_1\gets\min\{y_1,\gamma_2\}$, $y_2\gets\max\{y_2,\gamma_2\}$
        \EndFor
        \State $x_1 \gets x_1 +{\qbl}_x-\varepsilon$, $x_2 \gets x_2 +{\qtr}_x+\varepsilon$
        \State $y_1 \gets y_1 +{\qbl}_y-\varepsilon$, $y_2 \gets y_2 +{\qtr}_y+\varepsilon$
        \State $B \gets \big[(x_1,y_1), (x_2,y_2)\big)\subset\bbR^2$
        \Return $B \cap \calW \neq \emptyset$
    \end{algorithmic}
    \label{alg:voxel-isect-pred}
\end{algorithm}

\subsection{\texorpdfstring{Estimating the Expected Value of $k'$}{Estimating the Expected Value of k'}}
\label{ssec:estimating-kprime}

Given a voxel $V$, we need to provide a reasonable estimate of the value of $k'$, so we will not miss the ground truth location. However, we need a value that is still close enough to $k$ so that the resulting set $\calQ$ will be sufficiently small. 

As stated in Section~\ref{sec:sdsl-method}, this value of $k'$ may depend on various factors. In this work, we suggest a heuristic, which looks only at the distance $d\left((q_x, q_y), \partial\calW\right)\geq 0$ of a configuration $q = (q_x,q_y,q_\theta)\in\calC$ from the boundary of the environment $\partial\calW$. We note that the distance from the boundary is an invariant feature under translations and rotations of the map, and should be at least comparable for environments whose rooms are of similar dimensions.

Based on data collection from real-world environments, which is described in Section~\ref{ssec:exp-ekprime}, we implement the following heuristic function $\tilde{f}_{k'}:[0,\infty) \to [0,1]$ which returns the expected \emph{ratio} $k'/k$ of $k'$ over $k$ for a pose whose distance from the boundary is $\mathrm{dist} = d\left((q_x, q_y), \partial\calW\right)\geq 0$: 

\begin{align}
    \tilde{f}_{k'}\left(\mathrm{dist}\right) = \begin{cases}
        0.8 & \text{if } \mathrm{dist} < 0.975[m]\;,\\
        0.7 & \text{else}
    \end{cases}\;.
\end{align}
As shown in Section~\ref{ssec:exp-ekprime}, this simple function is a good rough approximation for typical office environments, and we see that it also performs well in practice.

To evaluate $\tilde{f}_{k'}$ on a voxel, we take the minimal value of $\tilde{f}_{k'}$ of each of its vertices and its center point.

\section{Fusing SDSL with Odometry}
\label{sec:fusion}

In this section, we describe how the lifelong dead reckoning pose estimate can be combined with the point cloud returned by the SDSL algorithm (Algorithm~\ref{alg:sdsl}), using Bayesian filtering~\cite{fox_bayesian_2003}.

Denote by $\hat{\gamma}:[0,\infty)\to\calC$ the estimated path, returned by any dead reckoning localization technique. This $\hat{\gamma}$ is estimated from the robot's odometry. We assume that for each time $t$, we can query the value of $\hat{\gamma}(t)$.

We expect a lifelong stream of consecutive runs of the SDSL algorithm, i.e., a sequence $X_1, X_2,\dots$ where each $X_n=\{q_1^{(n)}, \dots,q_{N_n}^{(n)}\}\subset \calC$ is a set of $N_n$ configurations, returned by Algorithm~\ref{alg:sdsl} at time $t_n$.
\smallskip
\noindent
For clarity, we introduce the following notation:
$q_i^{(n)}$ denotes the $i$-th candidate pose in $X_n$.
Denote the following estimated pose difference:
\begin{align}
    U_n = \hat{\gamma}(t_n) \cdot \left(\hat{\gamma}(t_{n-1})\right)^{-1}\;,
\end{align}
which is the estimated transformation the robot performed from time $t_{n-1}$ to $t_n$, and can be thought of as simply the robot's odometry. Similar to Monte Carlo localization~\cite{thrun_robust_2001}, we can use these odometries $U_1,U_2,\dots$ to evaluate the likelihood, over time, of each pose. The main difference is that instead of stochastically resampling, at each iteration $n$, we recompute the set of all possible poses, using the geometric constraints imposed by the perceived distance measurements, via the SDSL method. In Section~\ref{sec:analysis}, we also show that this approach has rigorous theoretical merits.

\smallskip

In Algorithm~\ref{alg:fusion} we present the pseudo-code for the procedure that computes the \emph{belief} state, denoted by $\bel(\cdot)$, which computes for each pose $q_{i}^{(n)}$ its likelihood $\bel(q_{i}^{(n)})$. 

From the recursive formulation of Bayesian filtering, we recall that the belief state can be recursively enumerated~\cite{fox_bayesian_2003}:

\begin{align}
    \bel(X_n) \propto \int_{X_{n-1}} \bbP[X_n | X_{n-1}=q, U_n] \cdot \bel(q)dq\;.
    \label{eq:bel}
\end{align}

Note that since we took into account the distance measurements in the SDSL step, we ignore them here. Equation~\ref{eq:bel} omits the denominator that usually appears in the Bayes filter recursion, as we can simply normalize the probabilities at each time step $n$.

At time $n=1$, the belief state is set as a uniform distribution, and all poses have the same likelihood.
We model the transition $\bbP[X_n | U_n, X_{n-1}=q]$ as a normal distribution, with a standard deviation of $\varepsilon$. That is, $X_n \sim \calN(U_n\cdot X_{n-1}, \varepsilon^2)$.

The method $\mathrm{Normalize(a_1,\dots,a_m)}$ divides each element by the sum of all elements, i.e., $a_i \gets a_i/\sum_{j=1}^ma_j$. 

We also note that instead of directly computing the exponential Gaussian probability, we can instead use the \emph{log likelihoods}~\cite{chen_bayesian_2003}. For simplicity, Algorithm~\ref{alg:fusion} shows the straightforward approach.

\begin{algorithm}
    \caption{Fusing SDSL with Odometry}
    \begin{algorithmic}
        \Require $X_1,X_2,\dots \subset \calC$, $U_1,U_2,\dots \in \calC$
        \State $p \gets \{\}$
        \For{$i=1,\dots,N_1$}
            $p(q_i^{(1)})=\frac{1}{N_1}$
        \EndFor
        \For {$n=2,\dots$}
            \For{$q_i^{(n)}=q_1^{(n)},\dots,q_{N_n}^{(n)}\in X_n$}
                \State $\bel(q_i^{(n)})\gets 0$
                \For{$q_j^{(n-1)}=q_1^{(n-1)},\dots,q_{N_{n-1}}^{(n-1)}\in X_{n-1}$}
                    \State $s\gets \frac{1}{\sqrt{2\pi \cdot \varepsilon^2}}\cdot e^{-\frac{1}{2\varepsilon^2}||q_i^{(n)} - U_n\cdot q_j^{(n-1)}||^2}$
                    \State $\bel(q_i^{(n)})\gets \bel(q_i^{(n)}) + s\cdot \bel(q_j^{(n-1)}) $
                \EndFor
            \EndFor
            \State $\mathrm{Normalize}(\bel(q_1^{(n)}), \dots, \bel(q_{N_n}^{(n)}))$\\
            \Comment{Report $\bel(q_1^{(n)}),\dots,\bel(q_{N_n}^{(n)})$}
        \EndFor
    \end{algorithmic}
    \label{alg:fusion}
\end{algorithm}

\section{Analysis and Guarantees}
\label{sec:analysis}
In this section, we present some analysis and guarantees of our method.
We first note that using the same proof of~\cite{bilevich_indoor_2025}, our algorithm is \emph{output sensitive}. That is, the time complexity is a function of the resulting set of possible locations for the robot.

\begin{theorem}
    \label{thm:runtime-simple}
    Assume that each call to $\mathrm{VoxelXPredicate}$ and $\mathrm{EstimateKPrime}$ takes at most $\calQ_{\calW}$ time,
    and there are $m$ reported poses at the last iteration of the algorithm. Then Algorithm~\ref{alg:sdsl} runs in time
    \begin{align}
        \Theta(k\cdot \calQ_{\calW}\cdot m\log\delta)\;.
    \end{align}
\end{theorem}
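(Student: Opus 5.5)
The plan is to bound the total work by counting voxel evaluations, level by level, in the subdivision tree that Algorithm~\ref{alg:sdsl} implicitly builds. Each voxel $V$ processed in some iteration costs one call to $\mathrm{EstimateKPrime}$ and $k$ calls to $\mathrm{VoxelXPred}$, plus $O(1)$ bookkeeping for $\mathrm{Split}$. So each processed voxel costs $\Theta(k\cdot\calQ_{\calW})$. It therefore suffices to show that the total number of processed voxels over all iterations is $\Theta(m\log\delta)$. Here $\log\delta$ should be read as $\log(\diam V_0/\delta)$, i.e.\ up to the sign convention, as in~\cite{bilevich_indoor_2025}.

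\textbf{Step 1: the number of iterations.} Every split halves each side, so after $j$ iterations all voxels have diameter $\diam V_0/2^j$. The while loop therefore runs exactly $L=\lceil\log_2(\diam V_0/\delta)\rceil=\Theta(\log(1/\delta))$ times.

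\textbf{Step 2: bounding the voxels per level.} I would show that the number of voxels processed at each level is $O(m)$. A voxel at level $j$ is processed only if its parent passed the test. The key monotonicity property I would verify is the following: if a voxel passes the count test, then it has at least one descendant that also passes. This holds up to the approximation of the bounding box, and I would inherit it from~\cite{bilevich_indoor_2025}.
\begin{itemize}
\item For the true predicate, it follows because $F_{d,g}(V)\cap\partial\calW\neq\emptyset$ implies that some sub-voxel's image meets $\partial\calW$.
\item The same conclusion must hold for the $k'$ threshold. Here the minimum over vertices and center used in $\mathrm{EstimateKPrime}$ should be handled the same way.
\end{itemize}
Given this property, each surviving voxel at level $j$ has a surviving descendant at the final level. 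Since sub-voxels are disjoint, distinct voxels at the same level have distinct descendants. Hence the survivors at each level number at most $m$, and the processed voxels at the next level number at most $8m$.

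\textbf{Step 3: assembling the bound.} Summing over the $L$ levels gives $O(m\log(1/\delta))$ voxel evaluations, which yields the upper bound. For the lower bound, each of the $m$ reported voxels has $L$ distinct ancestors, one per level, and each ancestor was evaluated. Every level therefore contributes at least $m/8^{L-j}$ evaluations, which seemingly only gives $\Omega(m)$ in general. To get a matching $\Omega(m\log\delta)$, I would instead argue via the output-sensitive model of~\cite{bilevich_indoor_2025}. There, the output set is a union of $\delta$-neighborhoods of lower-dimensional fibers, so the number of surviving voxels at level $j$ is $\Theta(m\cdot 2^{j-L}\cdot\ldots)$ in a way that still sums to $\Theta(m\log(1/\delta))$. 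Alternatively, I would interpret $\Theta$ as a worst-case tight bound attained, e.g., by well-separated isolated solutions, each carrying a chain of $L$ ancestors.

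\textbf{Main obstacle.} The main obstacle is the monotonicity in Step 2. Bounding-box overestimation and the voxel-dependent $k'$ mean a parent can pass while all its children fail, so the per-level count is not literally bounded by the final output. I would handle this as the cited work does: charge dead-end voxels to the final output via the Lipschitz property of $F_{d,g}$. Concretely, a voxel whose box meets $\partial\calW$ is within $O(\diam V+\varepsilon)$ of a true fiber point, and this charges each level to $O(m)$ final voxels up to a constant.
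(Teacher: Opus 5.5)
\textbf{There is a genuine gap, in Step~2, and it is not closed by your repair; the lower-bound argument also fails.} The paper itself only defers to~\cite{bilevich_indoor_2025}. Your decomposition (number of iterations $\times$ voxels per iteration $\times$ $\Theta(k\cdot\calQ_{\calW})$ per voxel) is the natural form of that argument, and Step~1 and the per-voxel cost are fine.

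The count test is not monotone, even with exact per-sensor predicates. Your first bullet shows that a single fiber meeting $V$ meets some child. But the $k'$ fibers that make $V$ pass may each meet a \emph{different} child, so $V$ passes while none of its $8$ children does. The Lipschitz charging fails for the same reason. Passing only gives, for each of $k'$ sensors separately, a point of that sensor's fiber within $O(\diam V+\varepsilon)$ of $V$. It does not give a point lying on $k'$ fibers simultaneously, so nothing forces a reported voxel near $V$. Near-misses produce dead-end voxels at coarse levels in numbers unrelated to $m$. Examples are fibers passing within $\diam V$ of each other without meeting, or nearly parallel sheets at separation much larger than $\varepsilon$. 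In the extreme case $m=0$, the claimed bound is $0$, while the root alone already costs $k+1$ predicate calls.

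So the statement is not provable as literally posed. The argument needs an extra hypothesis, and it is being used implicitly: that $|\calQ|$ never decreases from one iteration to the next. Equivalently, you can let $m$ denote the maximum of $|\calQ|$ over all iterations. This holds, for example, for the idealized joint predicate ``$V$ contains a point lying on at least $k'$ fibers'' with a voxel-independent $k'$, since the child containing that point passes. Under that hypothesis, your Steps~1--3 give $O(k\cdot\calQ_{\calW}\cdot m\log(\diam V_0/\delta))$ at once.

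Your lower-bound route is also wrong. When the solution set has positive Hausdorff dimension, the per-level counts grow geometrically toward the last level. The sum is then dominated by its last term and equals $\Theta(m)$, not $\Theta(m\log\delta^{-1})$. The logarithm survives only when every level carries $\Theta(m)$ voxels, i.e.\ for isolated (zero-dimensional) solutions. In addition, $\calQ_{\calW}$ is only an upper bound on the cost of a call, so no lower bound in terms of it can be derived at all. The $\Theta$ in the statement can therefore only be read as an $O$-bound attained in the worst case, e.g.\ by well-separated isolated solutions; this is your fallback reading. You should drop the Hausdorff-model argument rather than try to make it produce the logarithm.
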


To further understand the magnitude of the number of reported poses, we can use the notion of \emph{Hausdorff measure and dimension}, which is an extension of the Lebesgue measure for subsets of the Euclidean space that have Lebesgue measure zero, but still have some non-zero ``area'' in a lower dimension. For example, the Hausdorff dimension of a $2$-manifold embedded in $\bbR^3$ is $2$, and its Hausdorff measure is its area~\cite{evans_measure_2018}.

\begin{theorem}
    \label{thm:runtime-hausdorff}
    Assume that $M$ is the intersection of all fibers corresponding to distance measurements. Then, by taking $\delta\to0$, Algorithm~\ref{alg:sdsl} converges to $M$. Furthermore, the time complexity of our method is
    \begin{align}
        \Theta\left(k\cdot \calQ_{\calW}\cdot \log\delta^{-1}\cdot(\delta_0\cdot\delta^{-1})^{\calH_{\dim}(M)}\cdot \calH^{\calH_{\dim}(M)}(M)\right)\,
    \end{align}
    where $\delta_0$ is the diameter of $V_0$, $\calH_{\dim}(M)$ is the Hausdorff dimension of $M$ and $\calH^{\calH_{\dim}(M)}(M)$ is the $\calH_{\dim}(M)$-dimensional Hausdorff measure of $M$.
\end{theorem}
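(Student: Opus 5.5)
The proof splits into a convergence claim and a counting argument. The counting is reduced to Theorem~\ref{thm:runtime-simple}, which charges $k\cdot\calQ_{\calW}$ per surviving voxel per level over $\Theta(\log\delta^{-1})$ levels. What remains is to bound $m$, the number of voxels of diameter $<\delta$ reported at the last level, in terms of the Hausdorff measure of $M$.

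\emph{Convergence.} Let $\calQ_\delta$ denote the union of voxels reported for precision $\delta$. The sets $\calQ_\delta$ are nested as $\delta$ decreases along the dyadic sequence $\delta_0 2^{-j}$, because a voxel is split only if it survives.

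For completeness, I will argue that every voxel meeting $M$ survives. If $q\in V\cap M$, then for each measurement $i$ that hits a map feature, the point $F_{d_i,g_i}(q)$ lies on $\partial\calW$. That point is contained in the bounding box computed by $\mathrm{VoxelXPred}$, which is only enlarged by $\varepsilon$. Hence the predicate is true for at least $k'$ indices, provided $\mathrm{EstimateKPrime}$ does not overestimate the true $k'$, which is the $k$-$k'$ dynamic gap assumption. So $M\subseteq\bigcap_\delta \calQ_\delta$.

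For soundness, I will argue that a point $q\notin M$ is eventually excluded. For such $q$, at least one required constraint fails by a positive margin. Since $F_{d,g}$ is Lipschitz on $V_0$, the bounding box of $F_{d,g}(V)$ shrinks to the point $F_{d,g}(q)$ as $\diam V\to0$. Because $\partial\calW$ is closed, the box eventually misses it and the voxel is discarded. Thus $\bigcap_\delta\calQ_\delta=M$. Here $M$ is interpreted with the $\varepsilon$-tolerance and the $k'$ rule, i.e.\ as the set of poses satisfying at least $k'$ of the $\varepsilon$-thickened fibers. I will state this explicitly.

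\emph{Counting.} Write $s=\calH_{\dim}(M)$. At the last level every reported voxel has diameter comparable to $\delta$, and the voxels are congruent with pairwise disjoint interiors. Their total number is $m$, which is in essence a box-counting number $N_\delta(M)$. By the scaling of a level-$j$ voxel, $\diam V=\delta_0 2^{-j}\approx\delta$, so covering $M$ by such boxes gives
\[
N_\delta(M)\approx \calH^{s}(M)\,(\delta_0\delta^{-1})^{s},
\]
normalized so that $V_0$ has unit size. This uses the comparison between the Hausdorff measure and the $\delta$-covering numbers: $\calH^s_\delta(M)\approx N_\delta(M)\,\delta^s$. Substituting this value of $m$ into Theorem~\ref{thm:runtime-simple} yields the claimed bound.

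\emph{Main obstacle.} The difficult step is justifying $m=\Theta\left((\delta_0/\delta)^s\calH^s(M)\right)$. In general the Hausdorff and box-counting dimensions differ, and the surviving voxels form a neighborhood of $M$ rather than a minimal cover, since the bounding-box overestimation lets near-misses survive.

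I would first assume $M$ is a finite union of Lipschitz or semialgebraic pieces, which holds for polygonal maps, so that it is $s$-rectifiable. Then Minkowski content equals $\calH^s$ up to constants, and the $O(\delta)$-neighborhood of $M$ meets $\Theta(\calH^s(M)\,\delta^{-s})$ grid voxels.

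Next I would show that surviving voxels lie within distance $C\delta$ of $M$. This holds because the bounding-box error of $\mathrm{VoxelXPred}$ is $O(\diam V)$ by the Lipschitz constant of $F_{d,g}$, together with a transversality assumption that converts image distance into configuration distance. Passing to the limit $\delta\to0$ absorbs the constants into the $\Theta$.
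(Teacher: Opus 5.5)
The paper's proof is only a pointer to earlier work, but the shape of the bound shows the intended argument: Theorem~\ref{thm:runtime-simple} with $m$ replaced by $(\delta_0\delta^{-1})^{s}\calH^{s}(M)$. That is your route. Your added hypotheses are genuinely needed, since the statement as written cannot be proved without them. These are: reading $M$ with the $\varepsilon$-padding and the $k'$-rule, rectifiability so that Minkowski content matches $\calH^s$, and transversality so that surviving voxels are $O(\delta)$-close to $M$. Your convergence argument is fine under that reading.

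The step that fails is the last one, ``substituting this value of $m$ into Theorem~\ref{thm:runtime-simple}.'' First, that accounting charges about $m$ voxels to \emph{every} level, but you only control the final level. A voxel can pass $\mathrm{VoxelXPred}$ at a coarse level because of bounding-box slack and then lose all its children. For instance, if $M=\emptyset$, then $m=0$ for small $\delta$ while the work is positive. So your $C\delta$-localization must be run at every scale $\delta_j=\delta_0 2^{-j}$, which gives $m_j=O\bigl(\calH^s(M)(\delta_0/\delta_j)^s\bigr)$. Second, summing these gives $\Theta(m\log\delta^{-1})$ only when $s=0$. For $s>0$ the bounds form a geometric sequence ending at $m$, so the total work is $\Theta(k\cdot\calQ_{\calW}\cdot m)$, and the lower-bound half of the claimed $\Theta$, with its $\log\delta^{-1}$ factor, is false. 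This happens in exactly your setting. With $\varepsilon>0$ fixed, $M$ contains an open neighbourhood of any pose where $k'$ constraints hold strictly (e.g.\ the noise-free true pose), so $s=\dim\calC=3$ and $\calH^s(M)$ is just volume. Relatedly, your claim that the bounding-box error is $O(\diam V)$ holds only relative to this thickened set. Relative to the exact fiber intersection the $\pm\varepsilon$ padding never shrinks, so a genuinely lower-dimensional $M$ requires $\varepsilon=O(\delta)$. What your argument actually delivers is the bound with $O$ in place of $\Theta$ for all $s$. It gives a true $\Theta$ only when $M$ is a finite set of transversal poses, or $O(\delta)$-sized clusters as in the corollary that follows.
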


\begin{proof}
    Proof for both theorems follows the same as~\cite{bilevich_indoor_2025}.
\end{proof}

We also note that, assuming that our estimate of $k'$ is indeed correct, our method is robust:

\begin{theorem}
    \label{thm:robust}
    Assuming that the $k'$ estimate value of $\mathrm{EstimateKPrime}(V)$ is a correct lower bound to the actual number of perceived dynamic obstacles for each voxel $V$, Algorithm~\ref{alg:sdsl} is guaranteed to output at least one pose $q$ which is $\delta$-close to the ground truth location of the robot. 
\end{theorem}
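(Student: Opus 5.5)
The proof will be an invariant argument over the iterations of the subdivision loop in Algorithm~\ref{alg:sdsl}. Let $q^*\in\calC$ be the ground truth pose. Since the robot lies in the environment, $q^*\in\calW\times\bbS^1\subseteq V_0$. The invariant is: \emph{at the start of every iteration of the while loop, some voxel $V\in\calQ$ contains $q^*$.} Initially $\calQ=\{V_0\}$, so the invariant holds.

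For the inductive step, take the voxel $V\ni q^*$ in the current $\calQ$. It is enough to show that $V$ passes the test $cnt\geq k'$. Then $\mathrm{Split}(V)$ is added to $\calQ'$, and since the sub-voxels have union $V$ (the half-open boxes $[\qbl,\qtr)$ partition $V$), one of them contains $q^*$.

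To bound $cnt$, interpret the hypothesis as saying that at least $k'$ of the $k$ rays cast from $q^*$ hit features of the pre-determined map rather than dynamic obstacles. For each such index $i$, the measured $d_i$ agrees with the map up to the error bound $\varepsilon$: the point $F_{d_i,g_i}(q^*)$ lies within distance $\varepsilon$ of $\partial\calW$. By construction, the box $B$ computed in Algorithm~\ref{alg:voxel-isect-pred} contains the bounding box of $F_{d_i,g_i}(V)$ inflated by $\varepsilon$. I would verify this by checking two things:
\begin{itemize}
\item The rotated offset $(\gamma_1,\gamma_2)=R_\theta(\alpha_1,\alpha_2)$ attains its coordinate extrema over $[\theta_1,\theta_2]$ either at the endpoints or at critical angles where $\frac{d}{d\theta}\gamma_j=0$. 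These critical angles are exactly $\beta_i+j\pi$, so the set $\Theta$ captures the extrema.
\item The translation part ranges over $[{\qbl}_x,{\qtr}_x)\times[{\qbl}_y,{\qtr}_y)$, which the final shifts by ${\qbl}$ and ${\qtr}$ account for.
\end{itemize}
Hence $B$ meets the boundary, and $\mathrm{VoxelXPred}$ returns true for index $i$. This gives $cnt\geq k'$, so $V$ is split, and the invariant is maintained.

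For termination, each split halves every side, so the diameters decrease geometrically and the loop ends once $\max\diam V<\delta$. By the invariant, the returned $\calQ$ contains a voxel $V\ni q^*$ with $\diam V<\delta$. Every pose in $V$, in particular the reported center point, is therefore $\delta$-close to $q^*$.

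I expect the main obstacle to be the bounding-box containment step: justifying that the finite set $\Theta$ really captures the extremes of the rotated offset, and handling the wraparound of $\bbS^1$ within a voxel's angular interval. I would also need to reconcile a small mismatch. The predicate is phrased as $B\cap\calW\neq\emptyset$, while the geometric statement concerns intersection with $\partial\calW$. I would resolve this by noting that a point within $\varepsilon$ of $\partial\calW$ in the inflated box yields a nonempty intersection under the paper's convention. Finally, I would state explicitly that the hypothesis means at least $k'$ measurements are consistent with the map, so it bounds the number of dynamic readings from above by $k-k'$.
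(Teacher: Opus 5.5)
Your proposal is correct and follows the reasoning the statement relies on. The paper gives no separate proof of this theorem, but the intended justification is your invariant: the voxel containing the ground-truth pose always reaches $cnt\geq k'$, because $\mathrm{VoxelXPred}$ tests an $\varepsilon$-inflated bounding box of $F_{d_i,g_i}(V)$ and so never rejects a map-consistent reading, hence that voxel is split rather than discarded until diameter $<\delta$. Two of your clarifications are also right: reading the hypothesis as ``at least $k'$ of the $k$ readings are map-consistent'' (the $k$-$k'$-dynamic gap) rather than its literal wording, and noting that the $\calW$-versus-$\partial\calW$ mismatch is harmless because $\partial\calW\subseteq\calW$ for closed $\calW$.
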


An immediate corollary of Theorem~\ref{thm:robust} is that our method is inherently robust to kidnappings---even if the odometry is significantly off, e.g., the robot was picked up and placed in a different room, the new set of samples $X_n$ will already contain the new location of the robot. Notably, this robustness is achieved without the need for any explicit kidnap-detection heuristics or re-sampling~\cite{dellaert_monte_1999}.
This is particularly useful in real-world settings, where unexpected obstacles, blocked sensors, or sudden disruptions can easily throw off the robot’s pose. Our method handles these situations naturally, recovering from errors without special reset routines or manual intervention.

Finally, we note the following comparison with Monte Carlo-like methods:

\begin{corollary}
    When sampling random configurations in the bounding volume $V_0$ whose diameter is $\delta_0$, to guarantee, in expectation, that we will have at least one sample that is $\delta$-close to the ground truth location, we need to query 
    \begin{align}
       O(\delta_0\cdot \log\delta^{-1}\cdot \delta^{-\dim\calC})
    \end{align}
    particles.

    However, if the fiber intersection $M$ is a set of $m$ distinct poses, each contained in a ball of radius proportional to $\delta$, i.e., $\alpha\cdot \delta$, then our method would query only
    \begin{align}
        O(m\cdot V_{\dim\calC}\cdot \alpha^{\dim\calC}\cdot\log\delta^{-1})\;,
    \end{align}
    where $V_{\dim\calC}$ is the volume of the unit $\dim\calC$-ball.

    In other words, assuming that the query time of a particle and a voxel are similar, we substitute the exponential dependence $\delta^{-\dim\calC}$ with $\alpha^{\dim\calC}$, which is a significant saving for a sufficiently small $\alpha$.
\end{corollary}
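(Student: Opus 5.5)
The corollary makes two separate counting claims, and I would prove them independently: a hitting-time estimate for uniform sampling, and a level-by-level voxel count for Algorithm~\ref{alg:sdsl}, with the count taken from the output-sensitive bound of Theorem~\ref{thm:runtime-simple}. The comparison in the last sentence then follows by putting the two expressions side by side. Throughout, write $D=\dim\calC$ ($=3$ here) and treat $V_0$ as having volume $O(\delta_0^{D})$.

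\emph{Random sampling.} Let $q^*$ be the ground truth. A uniform sample in $V_0$ is $\delta$-close to $q^*$ exactly when it lands in the ball $B(q^*,\delta)$. This happens with probability
\begin{align*}
p=\frac{\mathrm{vol}\,(B(q^*,\delta)\cap V_0)}{\mathrm{vol}\,V_0}\geq c\,(\delta/\delta_0)^{D},
\end{align*}
where $c>0$ depends only on $D$ and on the shape of $V_0$. The number of draws until the first hit is geometric, so its expectation is $1/p=O(\delta_0^{D}\delta^{-D})$.

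To reach the stated form with the extra $\log\delta^{-1}$ factor, I would argue as follows. Cover $V_0$ by $N=\Theta((\delta_0/\delta)^{D})$ cells of diameter $\delta$. Since $q^*$ is unknown (and the kidnapped setting allows it to be anywhere), a sampler that guarantees a $\delta$-close sample for every possible $q^*$ must hit every cell. By the coupon-collector bound this takes $O(N\log N)=O(\delta^{-D}\log\delta^{-1})$ samples in expectation, with the dependence on $\delta_0$ absorbed into the stated leading factor. This dependence on $\delta_0$ is the one place where I would simply follow the paper's normalization rather than track the exponent exactly.

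\emph{Subdivision.} Here I would bound the number of voxels examined at each depth of Algorithm~\ref{alg:sdsl} and then sum over the $O(\log(\delta_0/\delta))=O(\log\delta^{-1})$ levels.

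1. By Theorem~\ref{thm:robust} and the definition of $\mathrm{VoxelXPred}$, a voxel survives only if it comes within a bounded distance of $M$.
2. The slack in that distance comes from the bounding-box overestimate of $F_{d,g}(V)$ and from the $\varepsilon$-inflation. I would absorb both into the constant $\alpha$, so that at every level $\ell$ the surviving voxels lie in the union of the $m$ balls of radius $\alpha\delta$, enlarged by one voxel diameter $\delta_\ell$.
3. At the final level, $\delta_\ell\approx\delta$. A volume argument then shows that the number of grid voxels of diameter $\delta$ meeting one such ball is at most
\begin{align*}
\frac{V_D(\alpha\delta+\delta)^D}{\Theta(\delta^D)}=O(V_D\,\alpha^{D})\quad\text{for }\alpha\gtrsim1.
\end{align*}
4. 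At coarser levels, each ball meets no more voxels than at the finest level, up to a $2^D$ constant, so every level contributes $O(mV_D\alpha^{D})$ voxels. Multiplying by the number of levels gives $O(m\,V_D\,\alpha^{D}\log\delta^{-1})$. This is precisely the quantity $m$ in Theorem~\ref{thm:runtime-simple}, specialized to point-like fibers. Equivalently, it is Theorem~\ref{thm:runtime-hausdorff} with Hausdorff dimension $0$ for the centers and measure $m$, thickened to radius $\alpha\delta$.

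\emph{Main obstacle.} The delicate step is 2–3: justifying that the conservative predicate never keeps voxels farther than $O(\alpha\delta)$ from $M$. A voxel of diameter $\delta_\ell$ produces a bounding box whose size grows with $\delta_\ell$ and with the sensor lever arm $|\alpha_1|,|\alpha_2|$. I would first try to show that this box has diameter at most $L\,\delta_\ell+2\varepsilon$ for a Lipschitz constant $L$ of $F_{d,g}$. I would then take $\alpha$ large enough to dominate $L$ and $\varepsilon/\delta$, which is exactly the hypothesis ``radius proportional to $\delta$'' in the statement.
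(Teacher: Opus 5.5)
\paragraph{Gap in the subdivision half.}
Your sampling half is fine. Geometric hitting time plus coupon collector over $\Theta((\delta_0/\delta)^{\dim\calC})$ cells is a legitimate derivation of the random-covering bound that the paper simply cites.

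The gap is in the subdivision half, at the step you flagged, and it already starts in step~1's justification.
\begin{itemize}
\item Step~1 needs \emph{soundness}: surviving voxels at level $\ell$ lie within $O(\delta_\ell)$ of $M$. Theorem~\ref{thm:robust} is a \emph{completeness} statement (the ground-truth voxel is never discarded) and gives nothing in that direction.
\item As stated, step~1 is false when $k'<k$. A voxel survives as soon as $k'$ predicates pass, so survivors need only be near intersections of $k'$ of the fibers. Those sets can be far larger than $M$.
\item Your repair points the wrong way. A bound $L\delta_\ell+2\varepsilon$ on the box only controls how much the box overestimates $F_{d,g}(V)$ in the plane. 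That shows points of a surviving voxel are \emph{nearly on} each passing fiber. Turning this into \emph{near $M$ in $\calC$} requires a reverse, transversality-type error bound $\mathrm{dist}(q,M)\le C\max_i \mathrm{dist}\bigl(F_{d_i,g_i}(q),\partial\calW\bigr)$. No upper Lipschitz bound implies this, and it fails, e.g., where fibers meet tangentially.
\item Enlarging $\alpha$ to dominate $L$ and $\varepsilon/\delta$ changes the statement, whose $\alpha$ is fixed by the geometry of $M$.
\item Your step~3 yields the factor $\alpha^{\dim\calC}$ only for $\alpha\gtrsim 1$. For smaller $\alpha$, each ball still meets $\Omega(1)$ voxels per level. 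So your count cannot support the closing ``small $\alpha$'' comparison.
\end{itemize}

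\paragraph{How the paper avoids this.}
The paper never opens the algorithm. It takes $M$ to be the union of the $m$ balls, so that $\calH_{\dim}(M)=\dim\calC$ and $\calH^{\dim\calC}(M)=m\,V_{\dim\calC}(\alpha\delta)^{\dim\calC}$. It then substitutes these into Theorem~\ref{thm:runtime-hausdorff}. There the product $(\delta_0\delta^{-1})^{\dim\calC}(\alpha\delta)^{\dim\calC}$ cancels the $\delta$-dependence and leaves $O\bigl(\log\delta^{-1}\cdot m\,V_{\dim\calC}\,\alpha^{\dim\calC}\bigr)$, up to a factor $\delta_0^{\dim\calC}$ that the statement suppresses. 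Whatever soundness is needed is contained in that theorem.

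Your closing ``equivalently'' remark is not this computation. With Hausdorff dimension $0$ and measure $m$, the theorem gives $\Theta(m\log\delta^{-1})$ with no $\alpha$ at all. The factor $\alpha^{\dim\calC}$ appears only when the balls themselves are treated as a full-dimensional $M$.
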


\begin{proof}
    First, note that the upper bound $O(\delta_0\cdot \log\delta^{-1}\cdot \delta^{-\dim\calC})$ on random sampling is a classical result~\cite{janson_random_1986}.

    We note that the Hausdorff dimension of a collection of $m$ $\dim\calC$-balls is $\dim\calC$, and their Hausdorff measure is $m$ times the volume of each sphere, times its radius to the $\dim\calC$-th power.

    Overall, using Theorem~\ref{thm:runtime-hausdorff}, we get an upper bound of (we omit the $k\cdot\calQ_{\calW}$ as analyze the number of elements to query):
    \begin{align}
        O\left(\log\delta^{-1}\cdot (\delta_0\cdot\delta^{-1})^{\dim\calC}\cdot m\cdot V_{\dim\calC} \cdot (\alpha\cdot\delta)^{\dim\calC}\right)\;.
    \end{align}
\end{proof}

\section{Experiments and Results}

\subsection{Implementation Details}
\label{ssec:implementation-details}

Our open source software\footnote{\url{https://www.cgl.cs.tau.ac.il/?p=7143}} is an independent C++ header-only library with Python bindings in \emph{nanobind}~\cite{jakob_nanobind_2022}, and ROS2~\cite{macenski_robot_2022} packages for the algorithm and for the physical robot. 
We use OpenMP~\cite{openmp_architecture_review_board_openmp_2008} for running the voxel intersection predicate in parallel. We use CGAL AABB Tree~\cite{alliez_2d_2024} for fast axis-aligned bounding-box intersection queries.

\begin{figure}[t]
    \centering
    \includegraphics[width=0.8\linewidth]{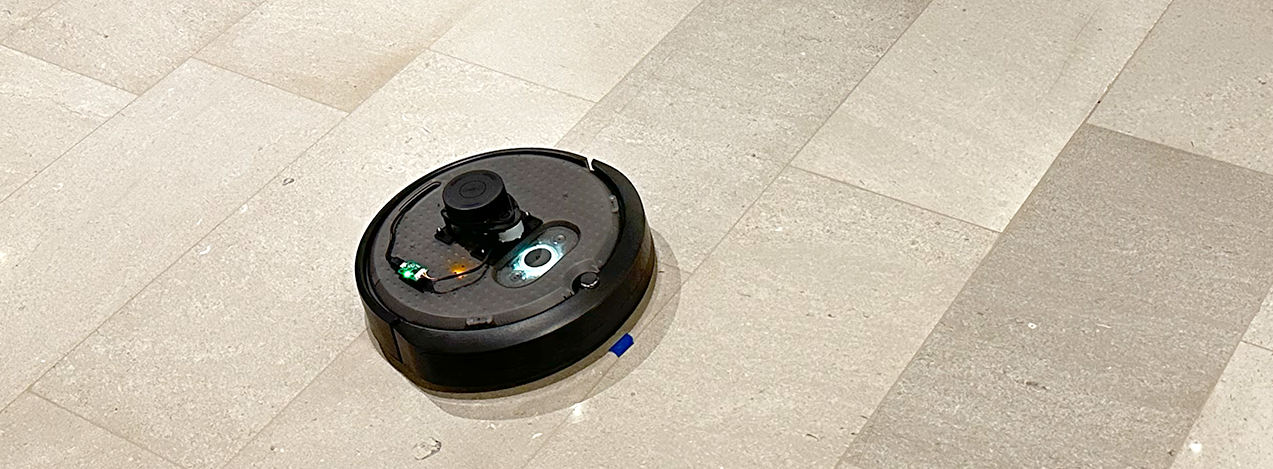}
    \caption{An image of our iRobot Create 3 in a corridor of map \texttt{fl4}. The RPLIDAR A1 is mounted on top of the robot.}
    \label{fig:tony}
\end{figure}

We run our experiments on an iRobot Create 3 mobile robot (See Figure~\ref{fig:tony}), equipped with a SLAMTEC RPLIDAR A1 two-dimensional LiDAR, and a Raspberry Pi 4, running ROS2 Humble. Algorithm~\ref{alg:sdsl} was run on a laptop with an 8-Core CPU.

\subsection{Evaluation in Real-World Scenarios}
\label{ssec:exp-eval}

We demonstrate our method on a physical robot in the following scenarios: 
\begin{itemize}
    \item \texttt{lab446}: Our laboratory, which is about $\sim 20[m^2]$.
    \item \texttt{fl4}: A full floor plan, which is about $\sim 600[m^2]$.
    \item \texttt{apt}: An apartment, which is about $\sim 70[m^2]$.
\end{itemize}

The maps were acquired using \texttt{slam-toolbox}~\cite{macenski_slam_2021}.
All environments are dynamic in nature; in our \texttt{lab446} and in our floor \texttt{fl4}, furniture (chairs, doors) is moved, as well as people naturally walking nearby. The maps for both environments were acquired a month before the experiments. In the \texttt{apt} environment, the map was acquired a few days before the experiments. During those days, furniture was moved and rearranged slightly, due to natural, typical usage.

\smallskip

In each environment, the robot completes $5$ laps. Each lap passes through $3$ landmarks whose exact locations are pre-determined. At each lap, we measure the deviation $\epsilon_{\mathrm{pos}}$ of position in meters, $\epsilon_{\mathrm{rot}}$ of orientation in radians, of the approximated location of the robot on each of the pre-determined landmarks.

We benchmark our method against the \texttt{nav2-amcl}~\cite{macenski_marathon_2020} ROS package, which is a commonly used Monte Carlo localization tool, based on Adaptive Monte Carlo Localization~\cite{dellaert_monte_1999}.

In Figure~\ref{fig:chosen-landmarks}, we show the three chosen landmarks for each environment, overlaid with our belief state and the pose estimate returned by \texttt{nav2-amcl}. In the supplementary video, we showcase one lap in each environment.

In Table~\ref{tab:error-rates} we present the average errors across all scenarios, for both methods. Note that our method achieves performance comparable to state-of-the-art tools, outperforms them for some cases, and does not drift over time.

\begin{table}[h!]
\caption{Average error rates across all scenarios for both methods}
\centering
\begin{tabular}{@{}lcccccc@{}}
\toprule
 & \multicolumn{2}{c}{\textbf{\texttt{lab446}}} & \multicolumn{2}{c}{\textbf{\texttt{fl4}}} & \multicolumn{2}{c}{\textbf{\texttt{apt}}} \\ 
\cmidrule(lr){2-3} \cmidrule(lr){4-5} \cmidrule(lr){6-7}
 & Ours & \texttt{amcl} & Ours & \texttt{amcl} & Ours & \texttt{amcl} \\ 
\midrule
$\epsilon_{\mathrm{pos}}\ [m]$ & 0.039 & \textbf{0.034} & \textbf{0.095} & 0.219 & 0.091 & \textbf{0.066} \\ 
$\epsilon_{\mathrm{rot}}\ [\mathrm{rad}]$  & \textbf{0.04} & 0.159 & \textbf{0.083} & 0.43 & \textbf{0.0247} & 0.184 \\ 
\bottomrule
\end{tabular}
\label{tab:error-rates}
\end{table}

\begin{figure}[t!]
    \centering
    \includegraphics[width=0.995\linewidth]{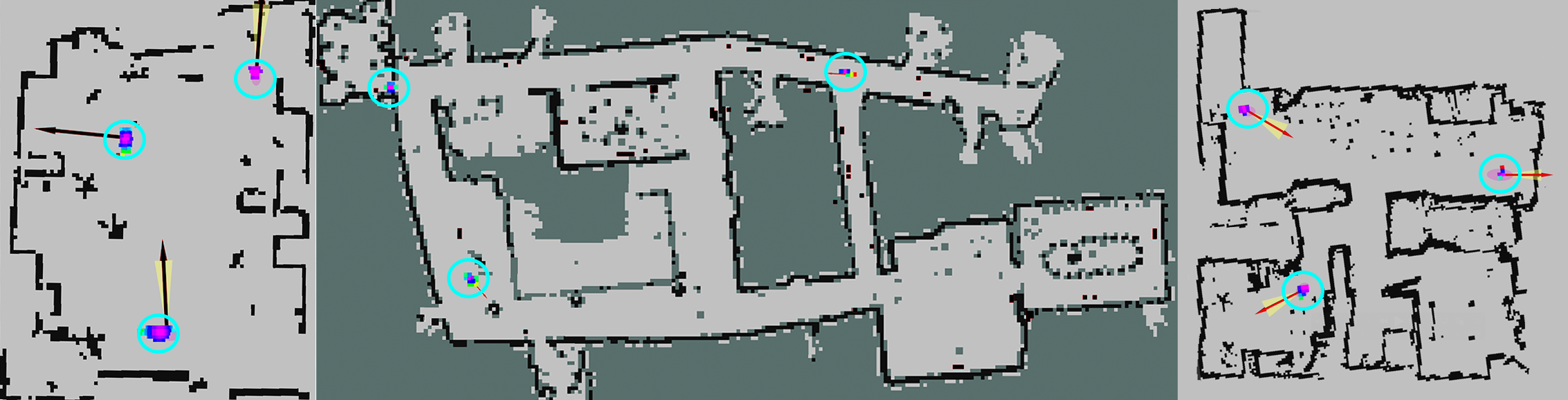}
    \vspace*{-1.25em}
    \caption{Visualization of the three chosen landmarks in each environment, circled in cyan. From left to right: \texttt{fl4}, \texttt{lab446}, and \texttt{apt}. Similar to Figure~\ref{fig:main}, the colorful squares represent the pose estimate in each landmark. We overlay our pose estimate with the estimate returned by \texttt{nav2-amcl}. For clarity, all other pose candidates are omitted.}
    \label{fig:chosen-landmarks}
\end{figure}

\subsection{Robustness Against Kidnapped Scenarios}
\label{ssec:exp-kidnap}

Recall that we solve the kidnapped robot problem from scratch at each iteration. Hence, we are able to quickly re-localize when the robot is kidnapped, without manually detecting the occurrence of such events. 

For each environment, we start from a known robot location, which is precise for both our method and \texttt{nav2-amcl}. We then pick up and place the robot at $10$ different locations in the environment. Furthermore, whenever possible, some furniture was moved, and at each location, there were people in the vicinity of the robot, posing as dynamic obstacles. 

Immediately when the robot is placed down, we request a global re-localization from \texttt{nav2-amcl}. Our method carries on uninterrupted.
For each method, we measure the time it takes to converge to a single pose.
To help both methods differentiate between symmetric poses, until convergence, the robot moves slowly forward.

In \texttt{lab446}, our method converged on all ten cases immediately at the first iteration when placed down, taking less than one second. On the other hand, \texttt{nav2-amcl} converged only on $7$ out of the ten placements, taking on average $59.04[\sec]$ and at least half a minute for all cases. 

In \texttt{apt}, our method still had a convergence rate of $100\%$, taking $12.3[\sec]$ on average. The method \texttt{nav2-amcl} converged only on $2$ out of the ten placements, taking on average $56.8[\sec]$.

Finally, we note that on \texttt{fl4} the method \texttt{nav2-amcl} did not converge at all for any of the placements. Our method had a convergence rate of $100\%$, taking $15.7[\sec]$ on average.

\subsection{Measuring the Expected Value of \texorpdfstring{$k'$}{k'}}
\label{ssec:exp-ekprime}

In Section~\ref{ssec:estimating-kprime}, we discuss the need for approximating the value of $k'$. In lieu of imposing arbitrary assumptions on the dynamic obstacles, we collect data and learn such an approximation.

We have placed five SLAMTEC RPLIDAR S1 two-dimensional LiDARs in two environments: (i) $\texttt{lab446}$ and the kitchenette of \texttt{fl4}, which recorded point clouds over a month and a week, respectively. The LiDARs are placed in opposing corners of each environment, so they can jointly map the entire room when there are no dynamic obstacles.

After recording, we take the union of the point clouds for each timestamp. By sampling random poses in the environment, and casting $k$ rays in simulation, we can evaluate the value of $k'$ that a robot placed in that random pose would perceive. We sampled for different values of $k=4,\dots,20$. As suggested in Section~\ref{ssec:estimating-kprime}, for each sampled point we also computed its distance from the boundary of the environment $\partial\calW$.

The processed result is a table which maps values of $\mathrm{dist}=d\left((q_x, q_y), \partial\calW\right)$ to the $k'/k$ ratio that would have been perceived at the time. We use the table for $\texttt{lab446}$ to fit a decision stump~\cite{abaspur_kazerouni_survey_2022}.
We achieved the decision rule: \emph{if $\mathrm{{dist} < 0.975[m]}$ then the $k'/k$ ratio is $0.8116$, else $0.7542$}. In Section~\ref{ssec:estimating-kprime}, we round down those ratios to get different integral values when multiplying this ratio by $k=16$.

We then interdependently evaluated a decision stump for the table corresponding to the kitchenette of $\texttt{fl4}$, and achieved the decision rule: \emph{if $\mathrm{dist}<0.75[m]$ then the $k'/k$ ratio is $0.8086$, else $0.8279$}. 

Note that the decision rules behave differently, although similar, probably due to the geometry and the different nature of the two environments. Still, we note that our suggested $\tilde{f}_{k'}(\mathrm{dist})$ is a lower bound for both, and performed well in all scenarios tested.

\section{Conclusions and Future Work}

In this work, we demonstrate an effective method of lifelong indoor localization using merely a sparse distance sampling and the robot's odometry. While this method achieves results comparable to state-of-the-art tools, it excels in inherent robustness against significant sudden shifts in the robot's location, as we solve efficiently the kidnapped robot at every iteration, from scratch. We also consider the existence of dynamic, unforeseen changes in the pre-determined map, based on insights gathered from real-world data.

However, when the robot faces many dynamic obstacles in extreme cases, we may temporarily lose its location. Once the dynamic obstacles pass, the robot will be able to quickly re-localize, but we plan to improve upon this shortcoming in future work.
We also plan to improve this localization technique further by actively controlling the robot and choosing a good intermediate motion that would allow faster convergence to the ground truth location.

Furthermore, in this work, we used a simple heuristic to learn the environment's dynamic nature. Nevertheless, more advanced machine learning tools can be incorporated to achieve even more accurate and robust results.

%
%
\bibliographystyle{IEEEtran}
\bibliography{bibliography}

\end{document}